\documentclass[conference]{IEEEtran}
\IEEEoverridecommandlockouts

\usepackage{cite}
\usepackage{url}
\usepackage{amsmath,amssymb,amsfonts}
\usepackage{amsthm}
\usepackage{graphicx}
\usepackage{textcomp}
\usepackage{xcolor}
\usepackage{booktabs}
\usepackage{array}
\usepackage{tikz}
\usetikzlibrary{arrows.meta,positioning,calc,fit,backgrounds,shapes.geometric}

\newcommand{\Pmax}{P_{\max}}
\newcommand{\noise}{\sigma^2}
\newcommand{\Kq}{K_q}
\newcommand{\bp}{\mathbf{p}}
\newcommand{\bH}{\mathbf{H}}
\newcommand{\bgam}{\boldsymbol{\gamma}}
\newcommand{\bw}{\mathbf{w}}
\newcommand{\bF}{\mathbf{F}}
\newcommand{\btau}{\boldsymbol{\tau}}
\newcommand{\bff}{\mathbf{f}}
\newcommand{\own}[2]{\left|h_{#1,#2,#2}\right|^{2}}
\newcommand{\gain}[3]{\left|h_{#1,#2,#3}\right|^{2}}

\newtheorem{proposition}{Proposition}

\makeatletter
\def\@IEEENORMtitlevspace{0.45\baselineskip}
\def\@IEEEMINtitlevspace{0.3\baselineskip}
\makeatother
\def\IEEEtitletopspace{0.15\baselineskip}

\makeatletter
\renewcommand{\@listI}{\leftmargin\parindent \topsep 2pt \parsep 0pt \itemsep 2pt}
\makeatother
\definecolor{ink}{HTML}{16242C}
\definecolor{muted}{HTML}{5C6C75}
\definecolor{exactc}{HTML}{0E7C86}
\definecolor{learnc}{HTML}{B07D1E}
\definecolor{attnc}{HTML}{9E2B3F}

\begin{document}

\title{Agentic Autoresearch for Cell-Edge Power\\
Control: Radically Redefining the Researcher's Role}

\author{\IEEEauthorblockN{Ahmad Khan\textsuperscript{$\dagger$}
\qquad Akram Bin Sediq\textsuperscript{$\dagger$}
\qquad Sara Azadegi Naeini\textsuperscript{$\dagger\ddagger$}
\qquad Raviraj S. Adve\textsuperscript{$\ddagger$}}
\IEEEauthorblockA{\textsuperscript{$\dagger$}\,Ericsson R\&D, Ottawa, 349 Terry Fox Drive, ON K2K 2V6, Canada\\
\textsuperscript{$\ddagger$}\,ECE Dept., University of Toronto,\\
10 King's College Road, Toronto, ON M5S 3G4, Canada}
}

\maketitle
\begin{abstract}
Designing machine learning algorithms for wireless resource management is
labour-intensive: the architecture, the loss function and the training recipe
are all specified by hand. We demonstrate that this design layer can be
surrendered to an autonomous agent in its entirety. We adopt the autoresearch
protocol, in which an AI coding agent edits a training script, runs a
fixed-budget experiment, and retains or discards the change according to a
single immutable metric. We grant the agent authority over the architecture
family, the input representation, the output parameterization, the loss function
and the task-sampling law, and set it a target chosen for its difficulty:
sum-least-percentile-rate power control across a multicell network. The formulation targets cell-edge throughput and is non-convex, non-smooth and
strongly NP-hard away from its max-min vertex. Safeguards render the results trustworthy: a hash-pinned evaluator, an
enforced inference contract and a pre-registered falsifier per experiment. In eighty-one
unattended experiments over twenty-six hours, the agent reached $99.5\%$ of a
converged minorization-maximization reference in one fixed-cost inference pass, at
roughly $600\times$ lower inference cost, closing $94\%$ of the gap from its first working architecture, with one
parameter set serving every network size and percentile target. It recovered
provable structure rather than tuned constants: the output parameterization it
discovered reproduces the exact max-min-optimal allocation at the minimum
percentile, for every value of the trained weights.
\end{abstract}

\begin{IEEEkeywords}
Autoresearch, AI agents, percentile optimization, power control, cell-edge,
max-min fairness, learning to optimize.
\end{IEEEkeywords}

\section{Introduction}
\label{sec:intro}

\subsection{Background and Overview}

Automating the loop of hypothesis, modification, measurement and retention is
not new: robot scientists \cite{king} and, within machine learning, AutoML and
neural architecture search \cite{zoph} predate large language models, though
they search a space fixed in advance by the designer. The qualitative shift came once LLMs could propose \emph{code}
rather than parameters \cite{funsearch,aiscientist,alphaevolve}. Karpathy \cite{karpathy} distilled the pattern into the minimal form we adopt:
three files suffice, an \emph{immutable evaluator}, a single \emph{mutable
training script}, and a natural-language research charter. An agent edits the
script, runs a fixed-budget experiment, reads a single scalar metric, and
commits or reverts through version control, unattended. The human contracts to a
research director, who specifies the problem, the metric and the protocol, and
thereafter audits the log.

Agentic algorithm design has reached wireless at both link level, where
\cite{aitelco} evolves channel estimators and link adaptation against an
immutable evaluation tool, and network level, where \cite{allstar} synthesizes
MAC schedulers from the literature for over-the-air deployment, \cite{genesis}
drives the RAN engineering life-cycle end to end, and \cite{comagent} generates
beamforming formulations. In each, the artifact under autonomous design is
conventional code, or a policy trained inside a pipeline a human specified.

Here the agent redesigns the \emph{learned system} itself, holding authority
over the architecture family, the input representation, the output
parameterization, the loss function and the task-sampling law. It exercised all
five over six architecture families; Table~\ref{tab:milestones} records the
result. To the best of our knowledge no prior work in wireless grants more than
one or two. Our target, moreover, is optimizing an entire multicell network, and
the problem a provably NP-hard resource allocation coupled through
interference \cite{parti}. Others employ LLMs at run time, as solvers \cite{llmoptira,noh} or workflow
designers \cite{wirelessagent,automas}, not as algorithm designers.

Radio resource management (RRM) is a natural setting for autoresearch, supplying
the three ingredients the loop requires: a scalar figure of merit, a fast simulator to serve as judge, and a mutable algorithmic artifact. Learning to optimize resource allocation is well studied \cite{sun,eisen,shen},
including a hand-designed learner for the same sum-least-$q$th-percentile (SLqP)
objective \cite{selfsup}, which trains a separate model per network size and
percentile on networks without cellular wraparound; here one parameter set spans
both, across seven wrapped cells. In every such work, however, the architecture, the loss
function and the training recipe remain products of human judgement. Autoresearch automates exactly this design layer, yielding \emph{two nested
loops}: an inner loop in which gradient descent fits the weights of the current
model, and an outer loop in which an agent redesigns the model, its features and
its objective.

To stress test this thesis we select a deliberately difficult target,
\emph{percentile optimization}. Part~I of \cite{parti} formulates SLqP rate maximization, directly optimizing
the throughput of the weakest percentile of users; this is the cell-edge metric
for which 3GPP and other industrial bodies have set explicit
next-generation targets, and which prior physical-layer techniques have
addressed only indirectly. It is further established in \cite{parti} that, away
from its polynomial-time max-min vertex, the SLqP program is non-convex,
non-smooth and strongly NP-hard, with the state of the art being iterative
minorization-maximization. An amortized solution is accordingly of practical interest, per-instance solvers
being far too slow for real scheduling timescales, and of theoretical interest,
the objective coupling users through an order statistic that defeats
conventional pooling.

\subsection{Contributions of This Paper}

The contributions of this paper may be summarized as follows.

\begin{itemize}
\item \textbf{Protocol:} We harden the autoresearch loop for scientific use in a
physical-layer setting (Section~\ref{sec:campaign}). Where prior agentic work in
wireless designs symbolic code or tunes a human-specified pipeline, here every
layer of a learned system is under autonomous design, so the agent's mandate
extends to the inductive bias itself.
\item \textbf{Campaign:} Eighty-one unattended experiments over twenty-six hours
closed $94\%$ of the gap between the agent's first working architecture and a
converged reference solver, at roughly $600\times$ lower inference cost, with a
single controller
serving every $K$ and in-band percentile (Section~\ref{sec:results}).
\item \textbf{Discovered structure:} The campaign recovers interpretable
structure and not merely tuned hyperparameters: the output parameterization it
discovers pins the model to the \emph{exact} max-min optimum at the minimum
percentile, for every value of the trained weights
(Proposition~\ref{prop:clamp}), a property the agent itself identified and
exploited.
\item \textbf{Reproducibility and public release:} We release the relevant files,
the complete experiment log, and weights as well as scripts needed to reproduce the
champion.\footnote{\scriptsize\url{https://github.com/akhan-ericsson/autoresearch-percentile-optimization}}
\end{itemize}

Section~\ref{sec:problem} states the problem, Section~\ref{sec:campaign} the
protocol, Section~\ref{sec:solution} the discovered solution,
Section~\ref{sec:results} the results, and Section~\ref{sec:conclusion} the
lessons learned for others adopting the protocol.

\section{Percentile Power Control}
\label{sec:problem}

We consider the downlink of a network of $B$ mutually interfering cells, each
containing $K$ single-antenna users, so that the network serves $KB$ users in
total. We denote by $h_{k,b,b'}$ the channel from the transmitter of cell $b'$
to user $k$ of cell $b$, collected in the tensor $\bH$, and by
$p_{k,b}\in[0,\Pmax]$ the power allocated to user $(k,b)$, collected in $\bp$.
With $P_{b'}(\bp)=\sum_k p_{k,b'}$ denoting the total power emitted by cell
$b'$, the SINR achieved by user $(k,b)$ is
\begin{equation}
\gamma_{k,b}(\bp) = \frac{p_{k,b}\own{k}{b}}
{\sum_{b'} \gain{k}{b}{b'} P_{b'}(\bp) - p_{k,b}\own{k}{b} + \noise},
\label{eq:sinr}
\end{equation}
and the corresponding rate is $r_{k,b}(\bp)=\log_2(1+\gamma_{k,b}(\bp))$, where
$\noise$ is the receiver noise power. The subtraction removes the user's own signal from its cell total, while the
intra-cell interference of its $K-1$ cell-mates is retained. Following
\cite{parti}, $x^{\uparrow}_i$ denotes the $i$th smallest entry of $x$, so the
SLqP utility is $f_{\Kq}(x)=\sum_{i=1}^{\Kq}x^{\uparrow}_i$ and the
power-control problem is
\begin{equation}
\operatorname*{maximize}_{\,0\le p_{k,b}\le \Pmax}\;
f_{\Kq}\big(r_{1,1}(\bp),\dots,r_{K,B}(\bp)\big),
\label{eq:problem}
\end{equation}
in which the percentile number $\Kq=\lceil q\,KB\rceil$, for a percentile target
$q$ expressed as a fraction, counts the weakest rates the objective sums. This work addresses the cell-edge band $\Kq/(KB)\in(0,0.25]$. At
$\Kq=1$, \eqref{eq:problem} reduces to max-min-rate power control, which may be
solved to global optimality in polynomial time \cite{parti}; for every $\Kq>1$
it is non-convex and strongly NP-hard \cite{parti,luozhang}, and for
$1<\Kq<KB$ the objective is, in addition, non-smooth. Rather than solve each instance iteratively, we seek a single amortized map
$\bp_\theta:(\bH,\Kq)\mapsto\bp$, one fixed-cost inference pass serving every $K$
and in-band percentile, and task an agent with its discovery.

\section{The Autoresearch Campaign}
\label{sec:campaign}

\subsection{Protocol}
\label{sec:protocol}

\begin{figure*}[t]
\centering
\resizebox{0.70\textwidth}{!}{%
\begin{tikzpicture}[
  font=\small,
  box/.style={draw, rounded corners=2pt, align=center, inner sep=2mm, line width=0.7pt},
  agent/.style={box, draw=ink, fill=ink!8, text width=30mm},
  mut/.style={box, draw=learnc, fill=learnc!10, text width=30mm},
  ev/.style={box, draw=exactc, fill=exactc!7, text width=34mm},
  dec/.style={draw=attnc, fill=attnc!6, diamond, aspect=2.2, align=center, inner sep=0.8mm, line width=0.7pt},
  hum/.style={box, draw=muted, fill=muted!8, text width=26mm},
  flow/.style={-{Stealth[length=2.3mm]}, line width=0.9pt, draw=ink},
  back/.style={-{Stealth[length=2.3mm]}, line width=0.9pt, draw=attnc}
]
\node[hum] (dir) {\textbf{Human director}\\{\scriptsize writes \texttt{program.md}:}\\{\scriptsize goal, metric, protocol}};
\node[agent, right=7mm of dir] (hyp) {\textbf{Agent}\\{\scriptsize reads charter $+$ log;}\\{\scriptsize one hypothesis $+$}\\{\scriptsize pre-registered falsifier}};
\node[mut, right=7mm of hyp] (edit) {\textbf{Edit \texttt{train.py}}\\{\scriptsize the only mutable file}\\[1.5pt]{\scriptsize\textbf{inner loop:} 2000-step}\\{\scriptsize Adam run, ${\sim}$1--2 min CPU}};
\node[ev, right=7mm of edit] (ev) {\textbf{Immutable evaluator} \texttt{prepare.py}\\{\scriptsize SHA-256-pinned; 17-pairing $(K,q)$ grid;}\\{\scriptsize pinned drops; inference contract}\\[1.5pt]{\scriptsize$\Rightarrow$ \textsc{HeldoutScore}}};
\node[dec, right=7mm of ev] (dec) {\scriptsize improved\\[-1pt]\scriptsize beyond noise?};

\draw[flow] (dir) -- (hyp);
\draw[flow] (hyp) -- (edit);
\draw[flow] (edit) -- (ev);
\draw[flow] (ev) -- (dec);
\coordinate (topr) at ($(ev.north)+(0,4mm)$);
\coordinate (botr) at ($(ev.south)+(0,-4mm)$);
\coordinate (logr) at ($(ev.south)+(0,-9mm)$);
\coordinate (edN) at ($(edit.north)+(6mm,0)$);
\coordinate (edS) at ($(edit.south)+(6mm,0)$);
\coordinate (dE)  at ($(dec.east)+(4mm,0)$);
\draw[flow] (dec.north) -- (dec.north |- topr)
  node[pos=0.5, right, font=\scriptsize]{yes}
  -- node[above, font=\scriptsize\bfseries]{git commit (bank)}
  (edN |- topr) -- (edN);
\draw[back] (dec.south) -- (dec.south |- botr)
  node[pos=0.5, right, font=\scriptsize]{no}
  -- node[below, font=\scriptsize\bfseries, text=attnc]{git revert}
  (edS |- botr) -- (edS);
\draw[flow, dashed] (dec.east) -- (dE) -- (dE |- logr)
  -- node[below, font=\scriptsize]{append outcome to log}
  (hyp.south |- logr) -- (hyp.south);
\end{tikzpicture}}
\caption{The two nested loops of the campaign.}
\label{fig:loop}
\end{figure*}

Following \cite{karpathy}, the campaign is built from three files with strictly
separated roles, as illustrated in Fig.~\ref{fig:loop}.

\begin{itemize}
\item \texttt{prepare.py}, the \textbf{immutable evaluator}, implementing the
channel model of \cite{parti} -- seven wrapped hexagonal cells, COST231 path
loss and Rayleigh fading, certified against that work's reference
implementation -- and scoring any candidate on a fixed, pinned held-out set. Its
SHA-256 hash is verified at every iteration; the agent cannot edit this judge.
\item \texttt{train.py}, the \textbf{sole file the agent may edit}, containing
the model, the loss and the training loop, and accumulating an append-only
changelog within its own docstring.
\item \texttt{program.md}, the \textbf{research charter}, stating the goal, the
metric, the reference bars, the inference contract and the exploration
protocol.
\end{itemize}

The metric is designed for comparability across settings. The held-out benchmark
is a grid of seventeen $(K,\text{percentile})$ pairings, with
$K\in\{1,2,4,6,8,10\}$ and targets $\{\min,\,p_{10},\,p_{25}\}$. Since raw SLqP
values differ by orders of magnitude depending on the values of $K$ and $\Kq$,
each pairing is scored as the ratio
of the model's mean SLqP to the \emph{full-power} mean SLqP on the same pinned
realizations, and \textsc{HeldoutScore} is the mean of these ratios. A score of $1.000$ therefore
denotes the trivial full-power floor, while the converged minorization-maximization algorithm (based on the iterative quadratic fractional transform (QFT))
reference of \cite{parti}, measured sample-matched on identical realizations,
sets the bar at $1.485$. Because the agent commits or reverts against it, this score is a
selection set; the champion was re-scored once on a disjoint, independently
seeded set of realizations, on which the reported figures stand. The evaluator additionally enforces an \emph{inference contract}: a ten-second
budget for inference over the entire grid, precluding disguised per-instance
optimization at test time, together with a no-test-time-fitting tripwire and
output-shape guards. A violation raises an exception rather than a score, and so cannot be banked.

The outer loop thereafter proceeds unattended. In each cycle the agent reads the
charter and the accumulated log, states a \emph{single} hypothesis with a \emph{pre-registered falsifier}, the
observation that would disprove it, edits \texttt{train.py}, and launches the inner loop: a complete training
run over $2000$ Adam steps, requiring one to two minutes on the same CPU, followed by
evaluation on the pinned grid. If \textsc{HeldoutScore} improves beyond a noise band of $\pm0.0005$, calibrated
by repeated identical runs, the change is committed; otherwise it is rejected.
Either way the outcome, including whether the falsifier fired, is appended to
the log before the next cycle.

Exploration breadth is governed by the charter specified in \texttt{program.md} rather than left to chance: by
deliberate choice, at most six architecture families may be opened, and a newly
opened family is protected from reversion for a fixed number of experiments, so that a new
idea is not discarded on a single untuned attempt. A family that has not
overtaken the incumbent by the end of that window is abandoned, and the
remaining budget is directed to depth on the most promising family. This protocol was
itself learned from an earlier full-range campaign, whose two failure modes,
seventeen families opened with one experiment each and none tuned, and a single
model straddling the policy shift near the sum-rate end, motivated the breadth cap and the restriction to the cell-edge band.

\subsection{On the Necessity of the Safeguards}

Each element earns its place. The hash pin removes, by construction, the characteristic failure of
self-improving systems, that of making the test easier rather than the model
better; other agentic-search systems arrived at the same principle independently
\cite{alphaevolve}. The inference contract keeps the search aligned with the engineering objective:
since an amortized model exists to improve upon solver latency, a candidate that
covertly optimizes per instance must fail rather than score. Pre-registered falsifiers convert the log into a sequence of
interpretable findings, negative results included. Sample-matching eliminates realization-to-realization variance; a mid-campaign
audit at experiment~63 corrected a violation of that discipline, altering the
interpretation of several earlier results.

\section{The Discovered Solution}
\label{sec:solution}

We now summarize the champion model to which the campaign converged, whose
pipeline is set out in Fig.~\ref{fig:pipeline}; full architectural detail is in
the released repository. The design that emerged is notable for the
quantity of exact classical structure it contains: only the encoder and a scalar
output head are trained, and the remainder is closed-form algebra that the agent
progressively moved \emph{out} of the learned components.

\begin{figure*}[t]
\centering
\resizebox{0.72\textwidth}{!}{%
\begin{tikzpicture}[
  font=\small,
  ex/.style={draw=exactc, fill=exactc!7, rounded corners=2pt, align=center, inner sep=1.8mm, line width=0.7pt},
  ln/.style={draw=learnc, fill=learnc!10, rounded corners=2pt, align=center, inner sep=1.8mm, line width=0.7pt},
  io/.style={draw=ink, fill=ink!10, rounded corners=2pt, align=center, inner sep=1.8mm, line width=0.8pt},
  fl/.style={-{Stealth[length=2.2mm]}, line width=0.9pt, draw=ink}
]
\node[io] (in) {$\bH,\ \Kq$};
\node[ex, right=5mm of in] (pre) {\textbf{classical pre-compute}\\{\scriptsize $\bp^{*}$, $\bp_{\mathrm{clip}}$ via \eqref{eq:recursion};}\\{\scriptsize order statistics, threshold}};
\node[ex, right=5mm of pre] (feat) {\textbf{24 features}\\{\scriptsize functionals of $(\bH,\Kq)$ only}};
\node[ln, right=5mm of feat] (enc) {\textbf{encoder} $\times 4$\\{\scriptsize cell path $+$ gain-biased}\\{\scriptsize attention, ${\sim}115$k weights}};
\node[ln, right=5mm of enc] (head) {\textbf{head}\\{\scriptsize bounded correction}\\{\scriptsize on $\bw_{\mathrm{clip}}$, zero-init}};
\node[ex, right=5mm of head] (clamp) {\textbf{cut clamp}\\{\scriptsize $\bw=\min(\bff, f^{\uparrow}_{\Kq})$}\\{\scriptsize (Prop.~\ref{prop:clamp})}};
\node[ex, right=5mm of clamp] (fp) {\textbf{recursion \eqref{eq:recursion}}\\{\scriptsize 40 passes}};
\node[io, right=5mm of fp] (out) {$\bp_\theta$};
\draw[fl] (in)--(pre); \draw[fl] (pre)--(feat); \draw[fl] (feat)--(enc);
\draw[fl] (enc)--(head); \draw[fl] (head)--(clamp); \draw[fl] (clamp)--(fp); \draw[fl] (fp)--(out);
\end{tikzpicture}}
\caption{The champion discovered by the campaign. Teal stages are exact algebra
with no trainable parameters; amber stages carry all trained weights.}
\label{fig:pipeline}
\end{figure*}

\subsubsection{Classical scaffolding}
The scaffolding is not the agent's invention. SINR balancing by fixed-point
iteration is classical \cite{yates,foschini,schubert}, and we restate it here
only because what the campaign discovered was where to \emph{put} it. Define
the balancing recursion
\begin{equation}
\bp \;\leftarrow\; \Pmax\,\frac{\bw \odot \bF(\bp)}{\max\big(\bw \odot \bF(\bp)\big)},
\qquad
F_{k,b}(\bp)=\tfrac{\mathcal{B}_{k,b}(\bp)}{\own{k}{b}},
\label{eq:recursion}
\end{equation}
in which $\odot$ is the elementwise (Hadamard) product, $\mathcal{B}_{k,b}$ is
the interference-plus-noise power appearing in \eqref{eq:sinr}, and $\bw>0$ is a
target SINR profile, one entry per user.

\begin{proposition}
\label{prop:balance}
\emph{(i)} $\bp$ is a fixed point of \eqref{eq:recursion} if and only if
$\bgam(\bp)=c\,\bw$ for some $c>0$, with peak power $\Pmax$, where $\bgam(\bp)$
collects the SINRs \eqref{eq:sinr}; \emph{(ii)} for $\bw\equiv\mathbf{1}$,
the all-ones vector, this fixed point is the max-min-rate optimum
$\bp^{*}(\bH)$; \emph{(iii)} the iteration converges to this fixed point from any
positive initialization.
\end{proposition}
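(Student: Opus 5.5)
Part (i) can be settled by direct algebra. Part (ii) follows from a monotonicity/scaling comparison, and part (iii) from the nonlinear Perron--Frobenius theory of standard interference functions.

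Throughout, write $\mathcal{B}_{k,b}(\bp)=\sum_{b'}\gain{k}{b}{b'}P_{b'}(\bp)-p_{k,b}\own{k}{b}+\noise$. Expanding this, $\mathcal{B}_{k,b}$ is an affine function of $\bp$ with nonnegative linear coefficients: the intra-cell cross terms, the inter-cell terms, and the constant $\noise>0$. Hence $\bF(\bp)=\mathbf{A}\bp+\mathbf{u}$ with $\mathbf{A}\ge 0$ entrywise and $\mathbf{u}>0$. Equation \eqref{eq:sinr} then reads $\gamma_{k,b}(\bp)=p_{k,b}/F_{k,b}(\bp)$.

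\emph{(i).} If $\bp$ is a fixed point, set $c=\Pmax/\max(\bw\odot\bF(\bp))>0$. Then $\bp=c\,\bw\odot\bF(\bp)$, so $\bgam(\bp)=c\bw$. Also $\max\bp=\Pmax$ by construction. Conversely, suppose $\bgam(\bp)=c\bw$ and $\max\bp=\Pmax$. Then $\bp=c\,\bw\odot\bF(\bp)$. Taking maxima gives $c\max(\bw\odot\bF(\bp))=\Pmax$, and substituting back shows $\bp$ is a fixed point of \eqref{eq:recursion}.

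\emph{(ii).} Let $\bw\equiv\mathbf 1$ and let $\bp$ be the fixed point, with common SINR $c$ and some index $j$ having $p_j=\Pmax$. Note that $\bp>0$, since $\bF>0$. Take any feasible $\bp'$ and let $s=\min_i p'_i/p_i$, attained at index $i$.
\begin{itemize}
\item If $s>1$, then $p'_j>p_j=\Pmax$, which contradicts feasibility. Hence $s\le 1$.
\item Since $\bp'\ge s\bp$ and $\mathbf A\ge0$, we get $F_i(\bp')\ge F_i(s\bp)=s(\mathbf A\bp)_i+u_i\ge sF_i(\bp)$, using $s\le1$.
\item Therefore $\gamma_i(\bp')=s p_i/F_i(\bp')\le s p_i/(sF_i(\bp))=c$.
\end{itemize}
So $\min\bgam(\bp')\le c=\min\bgam(\bp)$. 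Since $r=\log_2(1+\gamma)$ is increasing, $\bp$ maximizes the minimum rate, which is the $\Kq=1$ case of \eqref{eq:problem}.

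\emph{(iii), the main obstacle.} Write the iteration as $\bp\leftarrow T(\bp)/\|T(\bp)\|_\infty\cdot\Pmax$ with $T(\bp)=\bw\odot(\mathbf A\bp+\mathbf u)$. The map $T$ is affine, with a nonnegative matrix and a strictly positive constant term. It is therefore a concave, positive, monotone interference map in the sense of Yates \cite{yates}, and it is strictly subhomogeneous: $T(\alpha\bp)<\alpha T(\bp)$ for $\alpha>1$. The normalizing functional $\|\cdot\|_\infty$ is a monotone norm. For exactly this setting, Krause's theorem on concave Perron--Frobenius maps, equivalently the conditional eigenvalue results used for SINR balancing under a monotone norm constraint, gives two things. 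First, a unique pair $(\lambda,\bp)$ with $T(\bp)=\lambda\bp$ and $\|\bp\|_\infty=\Pmax$. Second, global convergence of the normalized iteration from any positive initialization.

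I would invoke that result. If a self-contained route is wanted instead, I would show that $T$ is a contraction in Hilbert's projective metric on the positive cone. The positive constant term $\mathbf u$ is what makes the contraction strict. Positivity of the images also keeps every iterate after the first strictly inside the cone, even when $\mathbf A$ is reducible, for instance when $K=1$ so that there are no intra-cell terms. Uniqueness then identifies the limit with the fixed point characterized in (i) and, for $\bw\equiv\mathbf 1$, in (ii).
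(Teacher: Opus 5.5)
Your proof is correct, and for (i) and (iii) it follows the paper's route. Part (i) is the same algebra. In (iii), Krause's concave Perron--Frobenius theorem for the $\|\cdot\|_\infty$-normalized map $T(\bp)=\bw\odot(\mathbf{A}\bp+\mathbf{u})$ is the same nonlinear Perron--Frobenius machinery the paper takes from \cite{schubert}. The paper first passes through standard interference functions \cite{yates} for a fixed target. Your observation that $\bF(\bp)=\mathbf{A}\bp+\mathbf{u}$ is affine with $\mathbf{A}\ge0$, $\mathbf{u}>0$ instead verifies Krause's hypotheses directly, and gives existence, uniqueness and convergence in one step.

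The genuine difference is in (ii). The paper only says that balancing a uniform target equalizes the SINRs, ``which is what max-min-rate optimality requires.'' Equalization is not sufficient by itself: every $c'(I-c'\mathbf{A})^{-1}\mathbf{u}$ with $0<c'\rho(\mathbf{A})<1$ has all SINRs equal to $c'$, and only the member with peak power $\Pmax$ is optimal. Your comparison uses exactly that missing ingredient, the active peak-power cap, since $s=\min_i p'_i/p_i\le1$ is forced by $p_j=\Pmax$. It is a complete proof. Just split off the case $s=0$, where $\gamma_i(\bp')=0$ trivially, before you divide by $s$.

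There are two small corrections.

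First, on uniqueness. Because $u_i>0$, your chain is strict whenever $s<1$, so any other maximizer satisfies $\bp'\ge\bp$. Concluding $\bp'=\bp$, which the definite article in ``the'' optimum asserts, needs $\mathbf{A}$ to be irreducible. That holds almost surely here for every $K$, because all inter-cell gains are positive under Rayleigh fading. This includes $K=1$, so your $K=1$ example of a reducible $\mathbf{A}$ is not one. The paper does not address uniqueness either.

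Second, your optional Hilbert-metric route is stated too broadly. $T$ is affine, not homogeneous, so it is not nonexpansive in the projective metric on the whole cone. For example, $\mathbf{x}=2\mathbf{y}$ has distance $0$, but $T\mathbf{x}$ and $T\mathbf{y}$ generally lie on different rays. The estimate does hold on the slice $\{\|\bp\|_\infty=\Pmax\}$. There $\min_i x_i/y_i\le1\le\max_i x_i/y_i$, and $\mathbf{u}>0$ pulls each ratio $T(\mathbf{x})_i/T(\mathbf{y})_i$ strictly toward $1$. The normalized iterates live on that slice, so the argument survives once you restrict it.
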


\begin{IEEEproof}[Proof sketch]
Part~(i) follows by algebra on \eqref{eq:sinr}. For part~(iii), fix the target
$\btau=c\bw$ and read $\btau\odot\bF(\bp)$ as the power each user would need in
order to meet its target given everyone else's current powers. That map is a
\emph{standard interference function} in the sense of \cite{yates}: it is
positive, it increases when interference increases, and scaling all powers up by
a factor raises the requirement by less than that factor, because the noise term
does not scale. Any such map has a unique fixed point, reached from any positive
start \cite{yates,foschini}. The recursion in \eqref{eq:recursion} differs in
that it renormalizes the target at every step rather than holding it fixed;
convergence of this variant to the SINR-balanced point follows from the
nonlinear Perron--Frobenius results of \cite{schubert}. Part~(ii) is then the
observation that balancing a uniform target equalizes all SINRs, which is what
max-min-rate optimality requires.
\end{IEEEproof}

Let $\bgam_{\mathrm{fp}}=\bgam(\Pmax\mathbf{1})$ denote the full-power SINRs
and $\mathrm{thr}_{\mathrm{fp}}$ their $\Kq$th smallest value. The campaign's
second family constructed from these the \emph{$\Kq$-clipped anchor}, namely the
allocation $\bp_{\mathrm{clip}}$ obtained by driving \eqref{eq:recursion} with
\begin{equation}
w_{\mathrm{clip}}[k,b]=\min\!\big(\gamma_{\mathrm{fp}}[k,b]/\mathrm{thr}_{\mathrm{fp}},\,1\big).
\label{eq:wclip}
\end{equation}
This anchor balances all above-threshold users against one another, while
declining to equalize the network down to those the full-power geometry has
already stranded; it reduces exactly to $\bp^{*}$ at $\Kq=1$.

What the campaign contributed here was placement rather than mathematics. Having
found at experiments~27--29 that the classical solution serves better as an
input feature than as a distillation target, the agent had to compute it inside
the inference budget, and the obvious routes do not fit. The reference algorithm
of \cite{parti} attains its accuracy by solving an exponential conic program at
every iteration, a solve intrinsic to the transform rather than an artifact of
implementation; max-min power control on its own is usually obtained by
bisection, which is likewise per-instance and iterative. The fixed-point form in
\eqref{eq:recursion} is the route that survives: forty passes of elementwise algebra, cheap enough to run per instance
and differentiable, so the same recursion serves both as a feature generator
and, later, as the map from emitted profile to powers.

\subsubsection{Learned components}
One parameter set serves every $K$ in the trained range. The $24$ per-user input features, defined in
full in the repository, span eight blocks: base gains, SINR probes at fixed
allocations, the max-min operating point, global order statistics, the
$\Kq$-clipped balance, set-restricted interference couplings, within-cell
context and $(K,\Kq)$ conditioning. Every one is a closed-form functional of
$(\bH,\Kq)$ alone: none evaluates a rate or the SLqP objective, and the order
statistics they use are taken over fixed, input-only SINR probes rather than
over the model's own allocation, so the features are fixed functions of the
problem instance rather than a partial solution of \eqref{eq:problem} for any
$\Kq>1$. Two blocks address encoder limitations that are structural rather than a matter
of capacity: the SLqP utility depends upon the rate vector only through the
identity of its $\Kq$ smallest entries, an ordinal property normalized pooling
cannot locate, which the order-statistic and set-coupling blocks supply
directly.

The encoder is permutation-equivariant, each round combining a cell-mediated
message-passing path, motivated by the fact that powers enter \eqref{eq:sinr}
only through per-cell totals, with a global attention path whose logits are
biased, per head and per round, by both the
aggressor-to-victim and victim-to-aggressor log-gains. Both directions are
supplied explicitly, being distinct entries of $\bH$: how strongly cell $b'$
interferes with user $(k,b)$ says nothing about how strongly cell $b$ interferes
with the users of $b'$, so neither is recoverable from the other. We emphasize
that the head does not emit powers; it emits a bounded multiplicative correction
upon the anchor profile $\bw_{\mathrm{clip}}$, zero-initialized in the sense that
its weights begin at values for which the correction is unity, so that training
starts at the classical policy and learns only departures from it.

\subsubsection{The clamp and an exactness guarantee}
The emitted profile $\bff$ subsequently passes through the cut clamp, which pins
every above-threshold user to the threshold, and finally through
\eqref{eq:recursion} to yield powers. The clamp
was introduced at experiment~49, at which point the agent observed, and the
following proposition confirms, that it resolves the entire minimum-percentile
column of the evaluation grid exactly: at $\Kq=1$ the model returns the
max-min-optimal allocation, and does so for every value of the trained weights,
so that column cannot regress.
\begin{proposition}
\label{prop:clamp}
For $\Kq=1$, the output of the model satisfies $\bp_\theta(\bH,1)=\bp^{*}(\bH)$
for every value of the trained parameters $\theta$.
\end{proposition}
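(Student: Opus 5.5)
The argument traces the pipeline of Fig.~\ref{fig:pipeline} at $\Kq=1$ and shows that the cut clamp turns whatever the head emits into a constant profile. The final recursion \eqref{eq:recursion} driven by a constant profile returns $\bp^{*}(\bH)$ by Proposition~\ref{prop:balance}. The learned components enter only through the emitted profile $\bff$, so if the clamped profile is independent of $\bff$ up to a positive scalar, the output cannot depend on $\theta$.

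\textbf{Step 1: the clamp collapses the profile.} At $\Kq=1$ the threshold $f^{\uparrow}_{\Kq}$ is $f^{\uparrow}_1=\min_{k,b} f_{k,b}$. Hence $w_{k,b}=\min(f_{k,b},f^{\uparrow}_1)=f^{\uparrow}_1$ for every user, so $\bw=f^{\uparrow}_1\mathbf{1}$ whatever $\bff$ is. I must check that $f^{\uparrow}_1>0$ so that $\bw>0$ is admissible in \eqref{eq:recursion}. This follows because the head emits a \emph{bounded multiplicative} correction on $\bw_{\mathrm{clip}}$. I will take the correction to lie in an interval $[a,b]$ with $a>0$, and I will use that $\bw_{\mathrm{clip}}>0$ entrywise. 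For the latter, \eqref{eq:wclip} gives $\min(\gamma_{\mathrm{fp}}/\mathrm{thr}_{\mathrm{fp}},1)$. At $\Kq=1$ we have $\mathrm{thr}_{\mathrm{fp}}=\min\bgam_{\mathrm{fp}}$, so in fact $\bw_{\mathrm{clip}}\equiv\mathbf{1}$, and all full-power SINRs are positive as long as the channels are nonzero. Thus $\bff$ is a positive vector, its minimum is positive, and the clamp yields a positive constant vector.

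\textbf{Step 2: scale invariance of the recursion.} The map in \eqref{eq:recursion} is invariant under $\bw\mapsto c\bw$ for $c>0$, because the factor cancels in the ratio $\bw\odot\bF(\bp)/\max(\bw\odot\bF(\bp))$. So driving it with $f^{\uparrow}_1\mathbf{1}$ is identical, iterate by iterate, to driving it with $\mathbf{1}$.

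\textbf{Step 3: identify the limit.} By Proposition~\ref{prop:balance}(ii)--(iii), the recursion with $\bw\equiv\mathbf{1}$ converges from any positive initialization to the max-min-rate optimum $\bp^{*}(\bH)$. Its initialization may depend on the features, but it is positive, so the limit is $\bp^{*}(\bH)$ regardless of $\theta$.

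\textbf{Main obstacle.} The champion runs a finite number of passes (forty), so the statement is exact only in the sense of the recursion's fixed point. I would address this in one of two ways. The first is to note that if the final recursion is initialized at the precomputed $\bp^{*}$ (already available from the classical pre-compute), then $\bp^{*}$ is a fixed point by Proposition~\ref{prop:balance}(i), so every pass returns it exactly. The second is to interpret $\bp_\theta$ as the limit of the recursion. I would first check the repository's initialization, since the first route gives exactness at every pass count. Either way, the $\theta$-independence, which is the substantive claim, already holds exactly: after Step~1 the entire downstream computation is a fixed function of $\bH$ alone.
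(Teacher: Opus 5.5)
Your proof is correct and is essentially the paper's: the clamp at $\Kq=1$ makes the profile uniform, $\bw\equiv f^{\uparrow}_{1}\mathbf{1}$, scale invariance of \eqref{eq:recursion} reduces this to $\bw\equiv\mathbf{1}$, and Proposition~\ref{prop:balance}(ii) then gives $\bp^{*}(\bH)$ independently of $\theta$, with your positivity check a useful detail the paper leaves implicit. On the forty-pass issue, your second reading (the claim concerns the fixed point) is the paper's own, whereas your first route would not give exactness, since the precomputed $\bp^{*}$ is itself produced by finitely many passes of \eqref{eq:recursion} in the classical pre-compute and so is only approximately a fixed point.
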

\begin{IEEEproof}
At $\Kq=1$ the clamp threshold $f^{\uparrow}_{1}$ is the smallest entry of $\bff$,
whence the clamped profile is uniform, $\bw\equiv f^{\uparrow}_{1}\mathbf{1}$.
Since \eqref{eq:recursion} is invariant to a positive rescaling of $\bw$, driving
it with this profile is equivalent to driving it with $\bw\equiv\mathbf{1}$, which
yields $\bp^{*}(\bH)$ by Proposition~\ref{prop:balance}(ii). The resulting
allocation is therefore independent of $\bff$, and hence of $\theta$.
\end{IEEEproof}

Proposition~\ref{prop:clamp} concerns the fixed point of
\eqref{eq:recursion}; the implementation runs a fixed forty passes, reproducing
$\bp^{*}$ to a relative error of $3.2\times10^{-7}$ on held-out data. The reparameterization is also lossless, since any feasible
allocation with one user at $\Pmax$ is the fixed point of its own SINR profile. The network thus
selects only the \emph{shape} of the allocation, and the two failure modes that
afflict direct power outputs, all-off collapse and full-power saturation, are
rendered structurally unreachable.

\subsubsection{Training objective}
The loss used by the champion solution is
\begin{equation}
\mathcal{L} = -\,\frac{f_{\Kq}\big(\mathbf{r}(\bp_\theta(\bH))\big)}
{f_{\Kq}\big(\mathbf{r}(\Pmax\mathbf{1})\big)}
\;+\; \alpha_T\,\mathcal{L}_{\mathrm{distill}},
\label{eq:loss}
\end{equation}
in which $\mathbf{r}(\bp)$ denotes the vector of rates. The first term is the
true SLqP objective, normalized per task by that batch's own full-power SLqP.
The agent's stated rationale, recorded in the log, is that the gradient magnitude of the raw objective scales
with $\Kq$, thereby starving precisely those small-$\Kq$ settings at which the
headroom over full power is largest. This normalization, introduced at
experiment~2, proved to be the single largest gain of the entire campaign. The second term distills, at constant weight $\alpha_T=1.0$, against cached
teacher profiles from short local searches initialized at the anchor
$\bw_{\mathrm{clip}}$, compared one-sidedly in gauge-fixed log-profile
coordinates. \subsubsection{Task sampling and optimization}
At each of $2000$ optimizer steps, eight independent $(K,\Kq)$ tasks are drawn
and their gradients averaged prior to a single Adam update. The size $K$ is
drawn uniformly on $\{1,\dots,10\}$, including the values $\{3,5,7,9\}$ never
present on the evaluation grid, and the percentile \emph{continuously} across
the band, rather than being restricted to the three graded targets. Furthermore, $\Kq=1$ is excluded, since by
Proposition~\ref{prop:clamp} its output is optimal for any parameters. The optimizer is Adam at a peak rate of $2\times10^{-3}$, cosine-annealed over
the budget. The recursion \eqref{eq:recursion} is
differentiated through, so the encoder observes the true sensitivity of the
powers to the emitted profile.

\section{Campaign Results}
\label{sec:results}

\begin{figure}[t]
\centering
\includegraphics[width=\columnwidth]{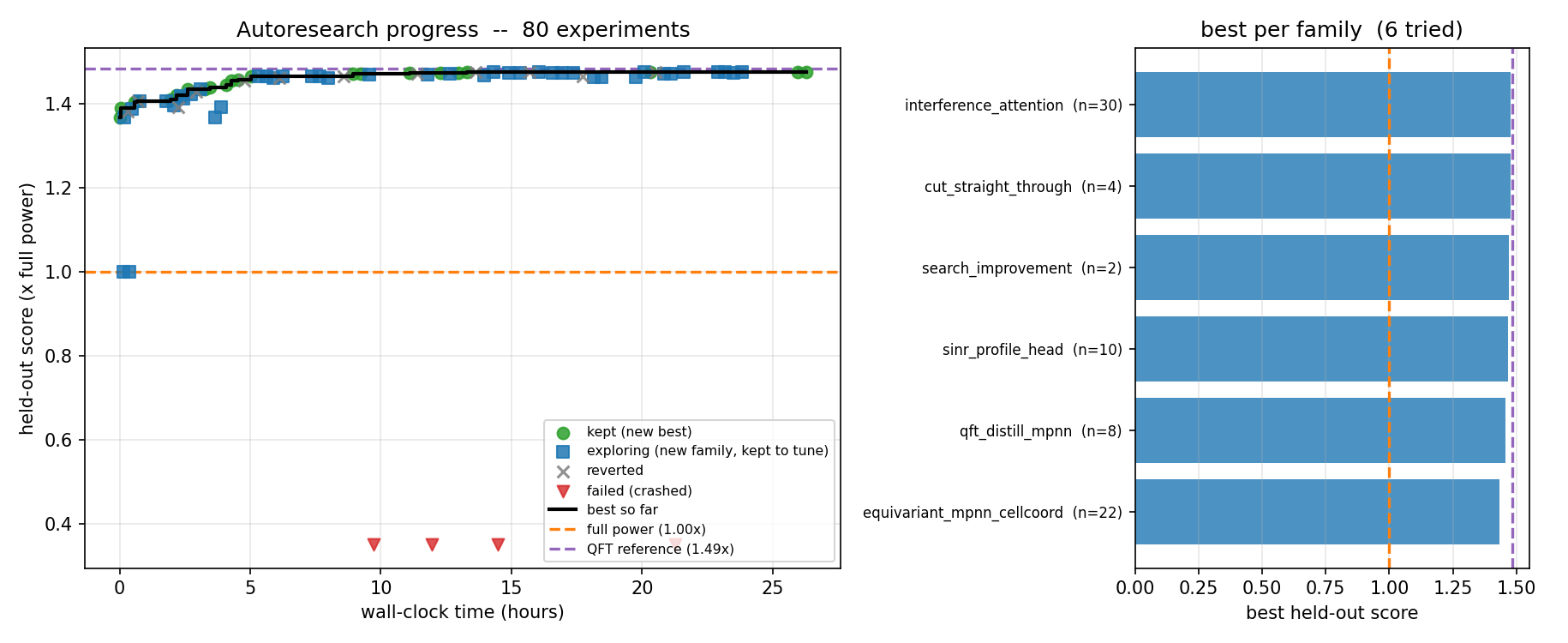}
\caption{Evolution of the campaign. \emph{Left:} held-out score against
wall-clock time, one marker per experiment; the staircase traces the incumbent.
\emph{Right:} best score within each of the six families opened, with the number
of experiments each received; the budget concentrates upon the two most
productive, as the breadth-then-depth protocol mandates.}
\label{fig:progress}
\end{figure}

\begin{table}[t]
\caption{Milestone experiments (banked champions and notable reverts).}
\begin{center}
\footnotesize
\begin{tabular}{@{}rlc@{}}
\toprule
\textbf{Exp} & \textbf{Change} & \textbf{Score} \\
\midrule
--- & full-power floor & 1.0000 \\
1  & equivariant cell-coordinated MPNN & 1.3687 \\
2  & ratio-normalized loss (largest single gain) & 1.3906 \\
27--29 & closed-form $\Kq{=}1$ teacher $\to$ input features & 1.4570 \\
31 & SINR-profile output reparameterization & 1.4656 \\
41--42 & gain-biased bidirectional attention & 1.4725 \\
49 & cut clamp (Prop.~\ref{prop:clamp}); min column exact & 1.4740 \\
52 & $\Kq\ge2$ training law & 1.4756 \\
65 & \emph{QFT itself as distillation teacher (reverted)} & \emph{1.4634} \\
\textbf{81} & \textbf{output-map resolution fix ($W_{\mathrm{scale}}$)} & \textbf{1.4775} \\
\midrule
--- & QFT reference (converged, sample-matched) & 1.4850 \\
\bottomrule
\end{tabular}
\end{center}
\label{tab:milestones}
\end{table}

Fig.~\ref{fig:progress} traces the campaign, and Table~\ref{tab:milestones} records its milestones. The champion scores $1.4775$ against the reference $1.4850$ on the pinned grid: $99.5\%$ of the strongest known benchmark for this strongly NP-hard problem, attained by one forward pass of the network followed by a fixed number of algebraic iterations, with no per-instance optimization. Measured against its own first working architecture at $92.2\%$, the agent closed $94\%$ of the remaining gap over eighty-one unattended cycles. On an Apple M2 Pro, inference over the entire grid takes $2.52$\,s against $1583$\,s for the reference solver on the same machine, a speedup of roughly $600\times$ and well inside the evaluator's ten-second contract. At $\Kq=1$ the emitted allocation is $\bp^{*}$ exactly, on
every pairing of the minimum-percentile row, as Proposition~\ref{prop:clamp} requires.

Three features of the trajectory distinguish autonomous \emph{research} from
autonomous tuning. First, the largest gains were \emph{diagnoses} rather than sweeps: experiment~2
followed from recognizing that the gradient magnitude of the raw objective
scales with $\Kq$, and experiment~81 from identifying the resolution of the
head's output map, rather than descent noise, as the binding constraint.

Second, the negative results are informative, each closed with at least two
independent probes: added capacity failed four times, objective softening three,
percentile re-weighting twice, distillation six. Notably, the last includes distillation from the certified reference solver
itself, at experiment~65, which scored appreciably below the self-trained
student: training across many realizations beats imitating a per-instance
optimizer, the campaign's most interesting negative finding.

Third, each family was opened by a diagnostic finding of its predecessor rather
than by enumeration. The founding message-passing family established that
pointwise models cannot represent the objective's coupling through per-cell
totals. The second found the closed-form max-min solution serves better as an
\emph{input feature} than as a distillation target.
The third reparameterized the output from powers to a target SINR profile,
making the classical policy the zero point of the search space; its plateau, and the argument that the function class could not represent the
required all-pairs comparison, motivated the fourth, attention-based family,
within which the remaining gains were found. Fig.~\ref{fig:progress} shows the budget split.

\subsection{The Strongest Known Benchmark, and What the Plateau Means}

The bar the campaign approached is not one method among several. Part~I of
\cite{parti} develops a second transform, mathematically distinct from the
quadratic one and carrying no guarantee of reaching the same stationary point;
the two nonetheless converge to closely matched values, and \cite{parti}
concludes that improving upon them may require a radically different approach. A
hand-designed self-supervised learner for the same objective likewise fails to
exceed it across the cell-edge band, trailing on six of seven reported settings
\cite{selfsup}. Neither an independent classical route nor an independent
learned one has beaten QFT here, which makes it the strongest known benchmark
for this problem rather than a convenient reference.

That settles what the campaign's final phase means. In its last dozen
experiments every attempt at a qualitatively new mechanism reverted, leaving
only parameter-level refinements to pay. This is a diagnosis, not a shortfall: the search
saturated at the strongest performance anyone has reached on this problem, at
roughly $600\times$ lower inference cost than any method of comparable quality.

The literature supports this reading. \cite{aitelco} reports convergence toward
classical variants on one of three tasks, channel estimation with known
covariance, where LMMSE is the optimal linear estimator and hence itself the best
available; on the other two, where no such solution exists, that framework
produced markedly different algorithms and beat a fine-tuned baseline outright.
The two campaigns together support a sharper statement than either alone:
agentic search recovers the classical solution precisely where the classical
solution is already the best available, and produces novel structure where it is
not. Convergence onto a classical form is therefore diagnostic of
the problem, not a limitation of the method.

One qualification belongs on the record: the hybrid that emerged is shaped by a
problem supplying a closed-form solution at one vertex, by a charter placing
Part~I \cite{parti} in the agent's context, and by an inference contract
penalizing per-instance iteration; a problem without such an anchor should be
expected to yield something else.

\subsection{Scope and Limitations}

Several qualifications belong on the record. All scores are simulator scores:
the evaluator implements the model of \cite{parti}, certified against its
reference implementation, but no measured data is used, and probes within the
$\pm0.0005$ noise band are not claimed as gains. The findings are empirical and
budget-dependent: other limits on training and inference time would favour other
designs, and the campaign is one draw from a stochastic search, as
Section~\ref{sec:conclusion}-A records. Our target was the cell-edge band
$(0,0.25]$; other metrics, sum-rate among them, require their own investigation.
The agent's log is self-reported; what renders it trustworthy is the version
history and the pinned evaluator, not the prose.

\section{Conclusions}
\label{sec:conclusion}

The learning-to-optimize literature has established that trained networks can
replace iterative solvers for interference management \cite{sun,eisen,shen}. In every case the architecture, the objective and the training recipe were the
product of months of human judgement. That layer constrains not only how quickly
the field takes on a new problem but how reliably: a poorly specified reward
makes a sound formulation look like a dead end, such false negatives are seldom
published, and method choice tracks the designer's training as much as the
problem's structure. This
paper removes it. On a network-level problem that is non-convex, non-smooth and
strongly NP-hard, an agent given only an immutable evaluator and a research charter designed a
system reaching within half a percent of the strongest known benchmark at roughly
$600\times$ lower inference cost, with one parameter set for every network size
and percentile target, together with a provable exactness guarantee and a log that is itself a result:
six design families carried to a fixed budget under a metric no participant
could edit is a survey of the space, not a record of one path through it. The human contribution reduced to stating the problem
and building an honest judge. We expect the pattern to hold across most RRM problems: wherever a fixed
simulator can serve as an impartial judge and a scalar metric captures the
engineering objective, the design of learned wireless algorithms becomes an
unattended and auditable process.

\subsection{Lessons Learned}

Three observations may assist others. The first concerns the division of labour among models: Claude Opus~5 built the
evaluator, seed script and charter, Claude Code served as the harness, and
Claude Sonnet~5 drove the outer loop. The initial design demands the stronger
model; the iterative loop, once the protocol is fixed, is well served by a
faster one.

The second concerns randomness, which plays a larger role than the final
artifact suggests. Which hypothesis the agent states next is sampled, and one
early divergence redirects the rest of the search. A colleague running a
comparable charter and prompts did not arrive at the same architecture and
achieved a lower score; a single campaign should not be read as identifying a
unique or globally best design.

The third concerns the charter, which we wrote in firm, authoritative and
unambiguous language, stating not merely what the agent should attempt but what
it must not do. This proved decisive: an agent running unattended for tens of hours acts upon
whatever latitude the charter leaves it, and hedged phrasing invites
reinterpretation precisely when no human is present to correct it. The family cap, the falsifier requirement and the prohibition upon editing the
evaluator were expressed as obligations.

\end{document}